\newtheorem{lemma}{Lemma}
\newtheorem{theorem}{Theorem}
\newtheorem{definition}{Definition}
\newtheorem{corollary}{Corollary}
\newtheorem{remark}{Remark}
\begin{document}

\title{On the Theory of Dynamic Graph Regression Problem}


\author{Mostafa Haghir Chehreghani \\
       mostafa.chehreghani@gmail.com \\
       Department of Computer Engineering and \\
       Information Technology, \\
       Amirkabir University of Technology, \\
       Tehran, Iran}

\date{}

\maketitle

\begin{abstract}
Most of real-world graphs are {\em dynamic}, i.e., they change over time by a sequence of update operations.
While the regression problem
has been studied for {\em static} graphs and {\em temporal} graphs,
it is not investigated for general {\em dynamic} graphs.
In this paper,
we study regression over
dynamic graphs.
First, we present the notion of {\em update-efficient matrix embedding},
that defines conditions sufficient for a matrix embedding
to be effectively used for dynamic graph regression
(under $l_2$ norm).
Then, we show that given a $n \times m$
update-efficient matrix embedding (e.g., the adjacency matrix) and after an update operation in the graph,
the exact optimal solution of linear regression can be updated in $O(nm)$ time for the revised graph.
Moreover, we show that this also holds when the matrix embedding is the Laplacian matrix and the update operations are restricted to edge insertion/deletion.
In the end, by conducting experiments over synthetic and real-world graphs,
we show the high efficiency of
updating the solution of graph regression.
\end{abstract}

\paragraph{Keywords}
Dynamic graphs, linear regression,
update-efficient matrix embeddings,
update time

\section{Introduction}
\label{sec:introduction}

Graphs are an important tool
for modeling many complex systems
such as the world wide web, social networks, road networks
and citation networks.
Most of real-world graphs are dynamic, i.e., they change over time by a sequence of {\em graph update operation}s.
A {\em graph update operation} can be either a node insertion, or a node deletion, or
an edge insertion, or an edge deletion or an edge weight change.

A fundamental task in machine learning and data analysis
is {\em linear regression}.
In this problem, we receive $n$ data,
where for each $i \in [1,n]$,
the data consists of a row in a $n \times m$ matrix $\mathbold{A}$ and a single element in a $n \times 1$ vector $\mathbold{b}$.
Then, the goal is to find a vector $\mathbold{x}$ such that
$\mathbold{A} \cdot \mathbold{x}$ is the closest point to $\mathbold{b}$ in the column span of $\mathbold{A}$,
under a proper distance measure, such as
the $l_2$ norm (Euclidean distance or the least squares distance)
or the $l_1$ norm (the least absolute deviation).
More formally, we want to solve the following problem
($p \in [1, \infty)$):
\begin{equation}
\label{eq:regression}
argmin_{\mathbold{x}} ||\mathbold{A} \cdot \mathbold{x} - \mathbold{b}||_p.
\end{equation}

While this problem has been studied for (static) high dimensional data, static graphs \cite{ecf352e8da75424b8b40be6f22ade69d}
and temporal graphs \cite{DBLP:conf/sdm/HanCSGO19},
to the best of our knowledge
it is not investigated for {\em general dynamic graphs}.
We refer to the regression problem over dynamic graphs as
{\em dynamic graph regression}.
In the current paper,
we study the theory of dynamic graph regression, over general graphs.
The importance of this problem
stems from a wide range of applications that generate dynamic graphs
(hence, many data analysis algorithms have been
developed for them
\cite{DBLP:conf/icdm/BorgwardtKW06,DBLP:journals/pvldb/HayashiAY15,DBLP:conf/hipc/MakkarBG17}).
Examples of these graphs include the world wide web, social networks and collaboration networks.
We focus on the most common form of Equation~\ref{eq:regression}
wherein $p$ is $2$, and is
called {\em least squares error} (or {\em $l_2$ norm}) linear regression.

The challenges of regression  over dynamic graphs
are twofold.
The minor challenge is a proper adaptation of
the standard setting of the regression problem to graphs.
As we will discuss later, this can be done by using
a {\em matrix embedding} for the graph.
The major challenge arises due to the dynamic nature of the graph.
This means when the structure of the graph changes
by an {\em update operation},
the already found solution must be updated,
to become valid for the revised graph.
This is not trivial, as updating the solution must be done in a time
considerably less than the time of computing it from scratch,
for the revised graph.

In the current paper,
we aim to address these challenges.
First, we present the notion of {\em update-efficient matrix embedding}
that defines conditions sufficient for a matrix embedding
to be used for the dynamic graph regression problem
(under $l_2$ norm).
We show that some of the standard matrix embeddings,
e.g., the (weighted) adjacency matrix, satisfy these conditions.
Then, we show that given a $n \times m$
update-efficient matrix embedding,
after an update operation in the graph,
the exact optimal solution of the graph regression problem for the revised graph can be computed in $O(nm)$ time.
In particular, using (weighted) adjacency matrix
as the matrix embedding of the graph,
it takes $O(n^2)$ time to update the optimal solution,
where $n$ is the number of nodes of the revised graph.
Note that in this situation,
computing the optimal solution
for the revised graph from scratch will take $O(n^3)$ time.
We also show that similar results hold
for the Laplacian matrix,
if the update operations are limited to
edge insertion and edge deletion.
In the end, by performing experiments over synthetic and real-world graphs,
we show the high empirical efficiency of the update method.


The rest of this paper is organized as follows.
In Section~\ref{sec:relatedwork},
we provide an overview on related work.
In Section~\ref{sec:preliminaries},
we introduce preliminaries and definitions used in the paper.
In Section~\ref{sec:update-efficient},
we introduce {\em update-efficient} matrix embeddings.
In Section~\ref{sec:graph_regression},
we study dynamic graph regression
under least squares error.
In Section~\ref{sec:experiments},
we present our experimental results.
Finally, the paper is concluded in Section~\ref{sec:conclusion}.

\section{Related work}
\label{sec:relatedwork}

In recent years, several algorithms have been proposed for different
learning problems over nodes of a static graph.
Kleinberg and Tardos~\cite{DBLP:journals/jacm/KleinbergT02} studied
the theory of the node classification problem and
showed its connection to Markov random fields.
With $k$ the number of node labels,
they presented an $O(\log k \log \log k)$-approximation algorithm
which has a polynomial time complexity.
Kovac and Smith
\cite{ecf352e8da75424b8b40be6f22ade69d} developed
a non-parametric regression model for nodes of a graph, where the
distance between estimated values (vector $\mathbf A \mathbf x$) and measured values (vector $\mathbf b$) is computed using $l_2$ norm.
Han et al. \cite{DBLP:conf/sdm/HanZGVO16}
proposed a representation
learning method optimized for
regression over nodes of a graph.
Unlike these algorithms that deal with static graphs, in this paper we concentrate
on efficiently {\em updating} the exact solution of least squares linear regression,
over {\em dynamic} graphs.

There are a number of relevant problems in the literature.
One of them is
regression over {\em temporal graphs}, wherein
node features (node contents) and measured values
change over time, but the structure of the graph remains unchanged.
This is in contrast with our studied problem which deals with changes in the structure of the graph.
For example, while in {\em dynamic graph regression} we want to efficiently update the regression solution after adding a new edge to the graph,
in {\em temporal graph regression} it is desired to update the solution when the values of the \textsf{income} feature (which is one of node features) of some nodes change.
A common technique used to solve {\em temporal graph regression} is
to utilize structural dependencies among measured values~\cite{DBLP:conf/nips/QinLZWL08,DBLP:journals/jmlr/SohnK12,DBLP:conf/icml/WytockK13}.
Han et al.~\cite{DBLP:conf/sdm/HanCSGO19}
presented a different approach
which is based on
jointly learning embeddings for the measured values and
node features.

The other relevant problem is  {\em online learning}
over nodes of a graph~\cite{DBLP:conf/nips/HerbsterP06,DBLP:journals/jmlr/HerbsterPP15}.
{\em Online learning} is used where it is computationally infeasible to solve the learning task over the entire dataset.
However in our studied dynamic graph regression problem, the task
can be solved over the entire graph and the challenge
is to effectively update the solution by structural changes in the graph.
In one of known online graph learning algorithms,
Herbster and Pontil~\cite{DBLP:conf/nips/HerbsterP06} used a perceptron for online label prediction of nodes of a graph.


Another problem which has some connection to our
studied problem is
learning embeddings (representations) for nodes or subgraphs of a graph.
In this problem, each node or subgraph is mapped to a vector in a low-dimensional vector space.
In recent years, several algorithms
have been proposed for it
 \cite{DBLP:conf/kdd/GroverL16,DBLP:conf/icml/YangCS16,DBLP:conf/icml/NiepertAK16,nmi},
even though this task dates back
to several decades ago.
For example,
Parsons and Pisanski
\cite{PARSONS1989143} presented vector embeddings for nodes of a graph such that
the inner product of the vector embeddings of any two nodes $i$ and $j$ is negative iff
$i$ and $j$ are connected by an edge;
and it is $0$ otherwise.
For an overview on learning embeddings for dynamic graphs,
interested readers
may refer to
\cite{DBLP:conf/icdm/LiHJL16,DBLP:journals/corr/abs-1805-11273,DBLP:journals/kbs/GoyalCC20}.

While the above mentioned methods learn with
nodes or edges of a single graph,
there also exist several algorithms in the literature
that learn with a population of graphs \cite{DBLP:journals/ida/ChehreghaniRLC07,DBLP:journals/ml/SaigoNKKT09,DBLP:journals/ida/ChehreghaniCLRG09,DBLP:conf/kdd/LeeRK18,10.1093/imaiai/iaz026,CALISSANO2022104950}.
For example, Calissano et al.~\cite{CALISSANO2022104950} studied building a regression model between a set of real values and a set of graphs.

Our studied setting is usually called {\em dynamical setting} ~\cite{10.1002/widm.1393},
wherein the structure of a single graph changes by means of a {\em graph update operation}, and
the goal is to effectively update the solution of a problem for the new graph.
The list of {\em graph update operations} that change the structure of the graph,
is presented in Section~\ref{sec:preliminaries}.
For a survey on different machine learning and data mining
algorithms in the {\em dynamical setting}, interested readers may refer to~\cite{10.1002/widm.1393}.

\section{Preliminaries}
\label{sec:preliminaries}

We use lowercase letters for scalars,
uppercase letters for constants and graphs,
bold lowercase letters for vectors
and bold uppercase letters for matrices.
We denote by $n$ the number of nodes of graph $G$.
A {\em dynamic graph} is a graph that
changes over time by
a sequence of {\em graph update operations}~\cite{10.1002/widm.1393}.
A {\em graph update operation} is
an operation that inserts an edge or a node into the graph;
or deletes an edge or a node and its incident edges from the graph;
or changes the weight of an edge.
We assume that when a new node in inserted,
some edges are also added between the new node and the existing nodes of the graph.
We also assume that the new node obtains the largest node id of the graph.

The {\em weighted adjacency matrix} of $G$, denoted with $\mathbold{W}$,
is a
$n \times n$ matrix
where $\mathbold{W}_{ij}$ contains the weight
of the edge from node $i$ to node $j$, if $i$ has an edge to $j$
(otherwise, it is $0$).
Given an undirected weighted graph $G$,
its {\em weighted Laplacian matrix} $\mathbold{L}$
is a square matrix of size $n \times n$ defined as
$\mathbold{L} = \mathbold{D} - \mathbold{W}$,
where $\mathbold{D}$ is a $n \times n$ diagonal matrix
with $\mathbold{D}_{ii}=\sum_{j=1}^n \mathbold{W}_{ij}$
and is called the (weighted) degree matrix.
The Euclidean norm or $l_2$ norm of a vector $\mathbold{v}$ of size $n$,
denoted with $||\mathbold{v}||_2$, is defined as
$\sqrt{\mathbold{v}_1^2+\mathbold{v}_2^2+\cdots+\mathbold{v}_n^2}.$
Let $\mathbold{A} \in \mathbb{R}^{n \times m}$.
The rank of $\mathbold{A}$ is the dimension of its column space (or its row space).
By $\mathbold{A}^*$ we denote the transpose of
$\mathbold{A}$ defined as
an operator that switches the row and column indices of $\mathbold{A}$.
The Singular Value Decomposition (SVD) of the $n \times m$ matrix $\mathbold{A}$
is defined as $\mathbold{U} \cdot \mathbold{\Sigma} \cdot \mathbold{V}^*$,
where $\mathbold{U}$ is a $n \times m$ matrix with orthonormal columns,
$\mathbold{\Sigma}$ is a $m \times m$ diagonal matrix with non-negative non-increasing entries on the diagonal,
and $\mathbold{V}^*$ is a $m \times m$ matrix with orthonormal rows.
The Moore-Penrose pseudoinverse matrix of $\mathbold{A}$, denoted by $\mathbold{A}^\dag$,
is the $m \times n$ matrix $\mathbold{V} \cdot \mathbold{\Sigma}^{\dag} \cdot \mathbold{U}^*$,
where $\mathbold{\Sigma}^{\dag}$ is a $m \times m$ diagonal matrix
defined as follows:
$\mathbold{\Sigma}^{\dag}_{i,i} = 1/\mathbold{\Sigma}_{i,i} $, if $\mathbold{\Sigma}_{i,i}>0$,
and $0$ otherwise.
It is well-known that the solution
\begin{equation}
\label{eq:closedform}
\mathbold{x} = \mathbold{A}^{\dag} \cdot \mathbold{b}
\end{equation}
is an optimal solution (the closed form solution) for the linear regression problem under the $l_2$ norm,
i.e., for the problem defined by Equation~\ref{eq:regression}, with $p=2$~\cite{general_inverse_book}.
Time complexity of computing this solution is cubic in terms of $n$ and $m$~\cite{general_inverse_book},
which makes it slow and time consuming to compute it from scratch over large graphs.

\section{Update-efficient matrix embeddings}
\label{sec:update-efficient}

By assuming that data are given in the form of
a graph,
we can extend the linear regression problem to the {\em linear graph regression} problem.
In {\em linear graph regression},
we are given a graph $G$, with $n$ nodes,
and a $n \times 1$ vector $\mathbold{b}$.
Then, we want to solve the following optimization problem:
$$argmin_\mathbold{x} ||{G} \cdot \mathbold{x} - \mathbold{b}||_2^2,$$
where $G \cdot \mathbold{x}$ must satisfy the following conditions:
i) $G \cdot \mathbold{x}$ is well-defined, and
ii) $G \cdot \mathbold{x}$ must produce a $n \times 1$ column vector.
A straightforward and common way to define $G \cdot \mathbold{x}$ is to replace $G$ by vector embeddings of nodes of $G$
(a matrix embedding\footnote{{We note that
this notion of {\em embedding} is different from the notion of {\em embedding} used in {\em graph pattern mining} \cite{DBLP:journals/fgcs/ChehreghaniABB20,DBLP:journals/datamine/ChehreghaniB16,DBLP:journals/tsmc/ChehreghaniCLR11}.}} of $G$),
denoted with $\mathbold{M}$.
As a result, for each node in the graph, we define a $1 \times m$ row vector,
and $\mathbold{x}$ is defined as a $m \times 1$ column vector.
Hence, the linear graph regression problem is converted into finding the
closest point to $\mathbold{b}$, in the column span of the matrix generated
by the vector embeddings of nodes of $G$.
In other words, we want to solve the following optimization problem
\begin{equation}
\label{eq:graphregressiob}
argmin_\mathbold{x}|| \mathbold{M} \cdot \mathbold{x} - \mathbold{b}||_2^2.
\end{equation}

As an example motivating linear graph regression problem,
assume that we are given the graph of a social network,
wherein each vertex is a person and the links represent
the friendship relations.
Moreover, a score is assigned to each vertex
which determines e.g., its reputation.
Now we want to find
a function (with a least squares error) for the scores of the vertices,
which is linear in terms of their structural properties.
Therefore, we need to solve the
linear graph regression problem
for the given network.
More precisely, first we need to compute
a matrix embedding $\mathbold M$ of the nodes of the network, wherein each row $i$ is a representation
of the structural properties of node $i$.
Then we need to find a function for the scores
which is linear in terms of rows of $\mathbold M$.

A property seen in real-world graphs is that they are usually {\em dynamic}.
This means they frequently obtain/loose nodes or edges.
As a result, after an {\em update operation}
the solution found for the linear graph regression problem must be updated.
This should be done in a time much less than
the time of computing it from scratch,
for the updated graph.
In this section, we discuss properties that matrix embeddings
suitable for updating the solution of graph regression should satisfy.
Before that, using an example we discuss what happens to a
matrix embedding, when an update operation occurs in the graph.


\begin{figure*}
\centering
\subfigure[Node deletion.]
{
\includegraphics[scale=0.5]{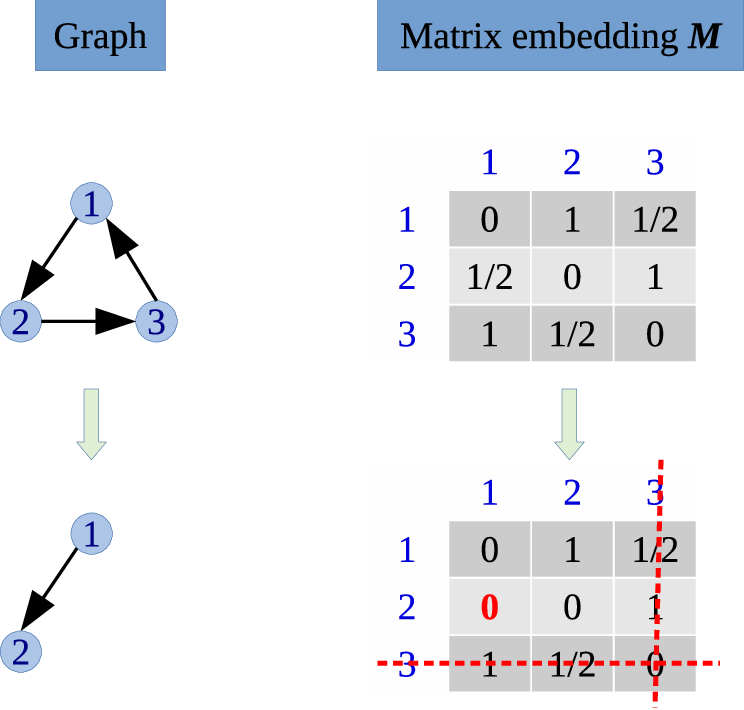}
\label{fig:11}
}
\quad
\subfigure[Node insertion.]
{
\includegraphics[scale=0.5]{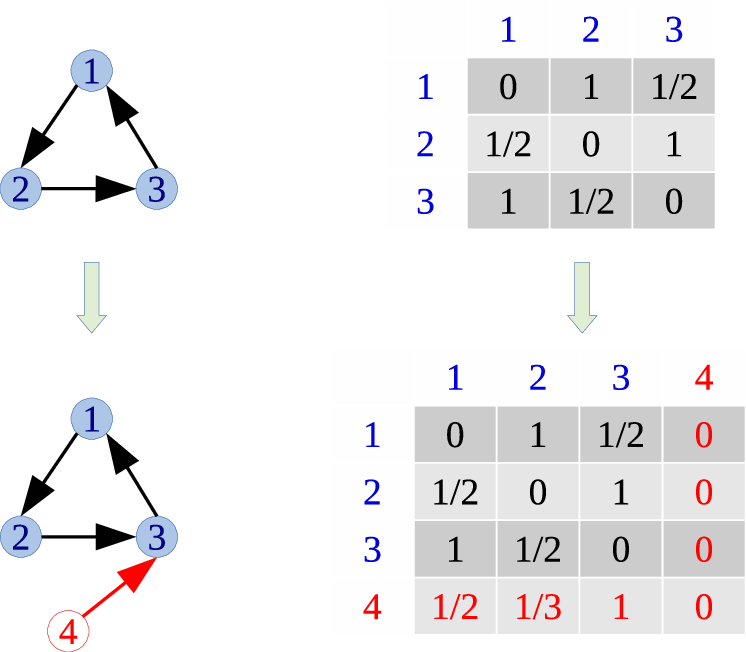}
\label{fig:12}
}
\caption
{
\label{fig:1}
Examples of updating the matrix embedding of a directed graph $G$,
after a node deletion and a node insertion.
}
\end{figure*}

Depending on how we define the matrix embedding $\mathbold{M}$,
deleting/inserting a node/edge may or may not
result in changes in the vector embeddings
of the other nodes.
As an example, consider Figure~\ref{fig:1}
that shows
a directed graph $G$, a node deletion operation (Figure~\ref{fig:11}),
and a node insertion operation (Figure~\ref{fig:12}).
In this example, the matrix embedding of the graph is
defined as follows:
for each node $i$ in $G$ we keep a row of size $n$,
where the $j^{th}$ entry is the inverse of the distance
(shortest path size)
from $i$ to node $j$.
If there is no directed path from $i$ to $j$ or if $i=j$,
the entry will be $0$.
In the figure, the row corresponding to each node shows its vector embedding,
and the matrix consisting of these rows is the matrix embedding.
For example, in the graph on the top figure (graph $G$), the vector embedding of node $1$ consists of the following elements:
$0$, $1$ and $1/2$.
First, consider deleting node $3$ from $G$.
Since for each node, we have a row and a column in the matrix embedding,
we need to delete the row and column corresponding to node $3$.
However, this is not enough as it does not always yield
a valid matrix embedding for the updated graph.
For example,
we need also to change the first element of the second row from $1/2$ to $0$.
The reason is that after deleting node $3$ and its incident edges,
there will be no path from node $2$ to node $1$.
The changes in the updated matrix embedding
are depicted with red in Figure~\ref{fig:11}.

Then as shown in Figure~\ref{fig:12}, consider inserting a new node $4$ to $G$.
In order to update the matrix embedding with respect to this change,
we need to add a column and a row corresponding to this new node.
The entries of this row and this column are depicted with red in Figure~\ref{fig:12}.
Fortunately, node $4$ and its edge to node $3$ do not change the distances between
the existing nodes of the graph.
So, rows/columns 1-3 of the matrix embedding do not need to change.
It is worth highlighting that after detecting these changes in $\mathbf M$,
matrix $\mathbf M^{\dag}$ must be updated accordingly and the updated $\mathbf M^{\dag}$ must be multiplied with $\mathbf b$
to form the updated solution.

%

\begin{definition}
\label{def:update_efficient}
Let $\mathbold{M}$ be a $n \times m$ matrix embedding of graph $G$
and $\mathsf h$ be a (complexity) function of $n$ and $m$.
We say $\mathbold{M}$ is $f$-{\em update-efficient},
iff the following conditions are satisfied:\footnote{Note that the graph and the information used to find the solution of dynamic graph regression, i.e., the pseudoinverse of matrix $\mathbf M$ and vector $\mathbf b$,
 change over time.
However, we only care about their values before and after an update operation,
as we want to find their values {\em after} the update operation,
based on their values {\em before} the update operation.
In order to keep notations as simple as possible, we do not parameterize them by time, rather,
we simply use the terms {\em before} and {\em after} the update operation to distinguish these two situations.
}
\begin{enumerate}
\item
an edge insertion/deletion or an edge weight change in $G$
result in a rank-$K$ update in $\mathbold{M}$,
where $K$ is a {\em constant}.
More precisely, if $\mathbold{M}$ and $\mathbold{M'}$ are
correct matrix embeddings before and after
one of the above-mentioned update operations in the graph,
there exist at most $K$ pairs of vectors $\mathbold{c^k}$ and $\mathbold{d^k}$
(of proper sizes) such that:
$$\mathbold{M'}=\mathbold{M}+\sum_{k=1}^K \left( \mathbold{c^k} \cdot \mathbold{d^k}^* \right).$$
We refer to each pair $\mathbold{c^k}$ and $\mathbold{d^k}$ as a pair of {\em update vectors},
and to $\sum_{k=1}^K \left( \mathbold{c^k} \cdot \mathbold{d^k}^* \right)$ as the {\em update matrix}.
\item
a node insertion in $G$ results in adding one column and/or one row to $\mathbold{M}$
and also at most a rank-$K$ update in $\mathbold{M}$.
\item
deleting the last node (i.e., the node with the largest id)
from $G$ results in deleting one column and/or one row from $\mathbold{M}$
and also at most a rank-$K$ update in $\mathbold{M}$.
\item
after an update operation in $G$, it is feasible to compute
all pairs of update vectors in $O(\mathsf h(n,m))$ time.
\item
if in $G$ any node $i$ is permuted with the last node
(i.e., with the node that has the largest id),
this can be expressed, in $O(\mathsf h(n,m))$ time,
in terms of a rank-$K$ update in $\mathbold{M}$.
\end{enumerate}
Sometimes when $\mathsf h$ is clear from the context,
we simply drop it and use the term {\em update-efficient}.
\end{definition}

\begin{remark}
\label{remark:closed}
The {\em update-efficient} property of matrix embeddings is not closed under matrix addition and matrix subtraction,
as they may increase the rank of the matrix.
\end{remark}

In Section~\ref{sec:graph_regression},
we use {\em update-efficient} matrix embeddings to develop
an efficient algorithm for dynamic graph regression.
Some well-known matrix embeddings belong to
this class of matrix embeddings.
In Lemma~\ref{lemma:update-efficient}
we show that (weighted) adjacency matrix is a $n$-update efficient matrix embedding.
Moreover, in Lemma~\ref{lemma:laplacian} we show that under some conditions,
the other  widely used matrix embedding,
i.e., the Laplacian matrix, provides a $n$-update efficient matrix embedding, too.

\begin{lemma}
\label{lemma:update-efficient}
Assume that $G$ is a simple, weighted and directed graph
and its matrix embedding $\mathbold{M}$ is defined as its weighted adjacency matrix.
$\mathbold{M}$ is a $n$-update-efficient matrix embedding
(i.e., $\mathsf h(n)=n$).
\end{lemma}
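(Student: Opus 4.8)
The plan is to verify each of the five conditions in Definition~\ref{def:update_efficient} directly, using the fact that for the weighted adjacency matrix $\mathbold{M}$, the entry $\mathbold{M}_{ij}$ depends only on the weight of the edge $(i,j)$ and on nothing else in the graph. This ``locality'' is what makes all the bookkeeping go through with $K=1$ and $f(n)=n$.

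First I would treat condition~1 (edge insertion/deletion/weight change). An edge weight change on $(i,j)$ from $w$ to $w'$ changes exactly one entry of $\mathbold{M}$, so $\mathbold{M'} = \mathbold{M} + (w'-w)\,\mathbold{e_i}\cdot\mathbold{e_j}^*$, a rank-$1$ update; edge insertion and deletion are the special cases $w=0$ and $w'=0$. Here $\mathbold{e_i}$ is the $i$-th standard basis vector of length $n$. So $K=1$ suffices, establishing condition~1. For condition~2, inserting a node (which receives the largest id, say $n+1$) appends one row and one column to $\mathbold{M}$; since the new node's id is new and the weighted adjacency entries of all old pairs $(i,j)$ with $i,j\le n$ are unchanged, no further correction is needed — i.e.\ the rank-$K$ update is the zero update (or, if one prefers a uniform statement, the new row/column entries for the incident edges can themselves be folded into rank-$1$ updates after appending a zero row/column). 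Condition~3 is symmetric: deleting the node with the largest id deletes the corresponding row and column, and again the remaining entries are exactly the weighted adjacency entries of the smaller graph, so no additional update is needed.

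For condition~4, I would observe that in each of the above cases the update vectors are scaled standard basis vectors together with the scalar $w'-w$ (or the incident-edge weights of the inserted/deleted node), all of which can be written down in $O(n)$ time — reading off the at most $n$ incident-edge weights of a node, or the single changed weight — so $f(n)=n$ is enough. Condition~5 is the one requiring a little more care, and I expect it to be the main (though still modest) obstacle: permuting node $i$ with the last node $n$ corresponds to simultaneously swapping rows $i$ and $n$ and columns $i$ and $n$ of $\mathbold{M}$. A single row swap $P_{i,n}\mathbold{M}$ is a rank-$2$ update of the form $\mathbold{M} + (\mathbold{e_i}-\mathbold{e_n})(\mathbold{m}_n - \mathbold{m}_i)^*$ where $\mathbold{m}_i,\mathbold{m}_n$ are the $i$-th and $n$-th rows; composing the row swap with the analogous column swap gives a bounded-rank (at most $4$, hence constant) update, and all the vectors involved — two rows and two columns of $\mathbold{M}$ — are extracted in $O(n)$ time. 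Thus $K$ can be taken as a fixed constant (the $K$ in the definition is allowed to be any constant, and taking $K=4$ covers every case above), and $f(n)=n$ throughout, which is exactly the claim. The only thing to be slightly careful about is the diagonal entry $\mathbold{M}_{ii}$/$\mathbold{M}_{nn}$ under the double swap, but since row and column swaps commute the composite permutation $P_{i,n}\mathbold{M}P_{i,n}$ handles the diagonal correctly, so no special-casing is needed beyond noting it.
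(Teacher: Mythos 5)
Your proof is correct and takes essentially the same route as the paper's: check the five conditions directly, handle edge operations as single-entry rank-$1$ updates, treat node insertion/deletion as pure row/column append/delete with no correction, and express the swap of node $i$ with the last node as a constant number of rank-$1$ updates computable in $O(n)$ time. The only (harmless) quibble is that your swap formula $(\mathbold{e_i}-\mathbold{e_n})(\mathbold{m}_n-\mathbold{m}_i)^*$ is a single outer product, hence a rank-$1$ (not rank-$2$) update — slightly tighter than the paper's two updates per row/column exchange — and since the definition only requires a constant $K$, this mislabel does not affect the argument.
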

\begin{proof}
We show that $\mathbold{M}$ satisfies all the five conditions
stated in Definition~\ref{def:update_efficient}.
\begin{enumerate}
\item
When an edge is inserted/deleted between nodes $i$ and $j$
or its weight changes,
only the $ij^{th}$ entry of $\mathbold{M}$ is revised.
Let $q$ denote the amount of this change in $\mathbold{M}[ij]$,
which can be either positive or negative.
To express it in terms of a pair of update vectors $\mathbold{c}$
and $\mathbold{d}$,
we only need to define e.g., $\mathbold{c}$ as a vector
(of size $n$)
whose all elements are $0$
except the $i^{th}$ element, which is $q$;
and $\mathbold{d}$ as a vector (of size $n$) whose all elements are $0$,
except the $j^{th}$ element which is $1$.
Obviously, $\mathbold{c} \cdot \mathbold{d}^*$ yields a $n \times n$ matrix whose all elements,
except the $ij^{th}$ element, are $0$;
and its $ij^{th}$ element is $q$.
Therefore, condition~(1) of Definition~\ref{def:update_efficient}
is satisfied.
\item
When a new node $i$ is added to $G$,
we add a new column for it in $\mathbold{M}$,
which contains the weights of its incoming edges
(i.e., if there is an edge from a node $j$ to $i$,
we put the weight of this edge in the $j^{th}$ entry of the new column).
Also, we add a new row for it in $\mathbold{M}$,
which contains the weights of its outgoing edges.
Furthermore, in $\mathbold{M}$ we only need to update
those entries
whose column number or row number are $i$.
These entries are updated during column/row addition.
Hence, no update vector is required.
As a result, condition~(2) of Definition~\ref{def:update_efficient}
is satisfied.
\item
When the last node is deleted from $G$,
we delete its corresponding column and row from $\mathbold{M}$.
Furthermore, in $\mathbold{M}$ we only need to update those entries
whose column number or row number are $i$.
These entries are already deleted during column/row deletion.
Therefore, no update vector is required
and hence, condition~(3) of Definition~\ref{def:update_efficient}
is satisfied.
\item
In the case of edge insertion/deletion and edge weight change,
the update vectors $\mathbold{c}$ and $\mathbold{d}$ can be computed in $O(n)$ time.
In the case of the other update operations,
there is no need for update vectors.
As a result, condition~(4) of Definition~\ref{def:update_efficient}
is satisfied.
\item
When in $G$ we permute a node $i$ with the last node,
in $\mathbold{M}$ we only need to first exchange the $i^{th}$ column with the last column
and then in the resulted matrix,
exchange the $i^{th}$ row with the last row.
These two changes in $\mathbold{M}$
can be expressed in terms of a rank-$K$ update matrix, as follows.
We focus on exchanging the $i^{th}$ column with the last column,
as exchanging the $i^{th}$ row with the last row can be done in a similar way.
Let $l$ denote the index of the last column.
First, note that we may consider exchanging
the $i^{th}$ column with the $l^{th}$ column
as adding to each entry $ji$ in the $i^{th}$ column
the value $-\mathbold{M}[ji]+\mathbold{M}[jl]$;
and adding to each entry $jl$ in the $l^{th}$ column
the value $-\mathbold{M}[jl]+\mathbold{M}[ji]$.
Now let focus on the $i^{th}$ column
(a similar procedure can be used for the $l^{th}$ column).
We want to express the additions to the $i^{th}$ column in terms of
a pair of update vectors $\mathbold{c}$ and $\mathbold{d}$.
We can define $\mathbold{c}$ as a vector whose $j^{th}$ entry
contains $-\mathbold{M}[ji]+\mathbold{M}[jl]$;
and $\mathbold{d}$ as a vector whose all entries,
except the $i^{th}$ entry, are $0$ and its $i^{th}$ entry is $1$.
Clearly, $\mathbold{c} \cdot \mathbold{d}^*$ yields a matrix
whose $i^{th}$ column includes the values $-\mathbold{M}[ji]+\mathbold{M}[jl]$,
for $1 \leq j \leq n$,
and its other entries are $0$.
As a result, exchanging the $i^{th}$ column with the $l^{th}$ column
can be done using $2$ rank-$1$ pairs of update vectors.
Moreover, computing vectors $\mathbold{c}$ and $\mathbold{d}^*$
can be done in $O(n)$ time.
Hence, condition~(5) of Definition~\ref{def:update_efficient}
is satisfied.
\end{enumerate}
\qed
\end{proof}

\begin{lemma}
\label{lemma:laplacian}
Suppose that $G$ is a weighted, undirected and bounded-degree graph
and its matrix embedding $\mathbold{M}$ is defined as its weighted Laplacian matrix.
$\mathbold{M}$ is a $n$-update-efficient matrix embedding\footnote{Note that when inserting a new node to a
bounded-degree graph,
at most a constant (bounded) number of edges are drawn between the new node and existing nodes.}.
\end{lemma}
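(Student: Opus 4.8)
The plan is to check, one by one, the five conditions of Definition~\ref{def:update_efficient} for the embedding $\mathbold{M} = \mathbold{L} = \mathbold{D} - \mathbold{W}$, taking the complexity bound $f(n) = n$ and using the bounded-degree hypothesis; throughout, write $\Delta$ for the degree bound and $\mathbold{e}_i$ for the $i$-th standard basis column vector of the appropriate dimension. The global strategy mirrors the proof of Lemma~\ref{lemma:update-efficient}, but the presence of the degree matrix $\mathbold{D}$ forces some extra care in identifying the low-rank update matrices.

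For condition~(1), consider inserting/deleting the undirected edge $\{i,j\}$ or changing its weight, and let $q$ be the resulting change in $\mathbold{W}_{ij}$ (equivalently in $\mathbold{W}_{ji}$); $q$ may be positive, negative, or correspond to creating or removing the edge. Then $\mathbold{W}$ changes only in entries $ij$ and $ji$ (each by $q$), while $\mathbold{D}$ changes only in $\mathbold{D}_{ii}$ and $\mathbold{D}_{jj}$ (each by $q$), so $\mathbold{L}' - \mathbold{L} = q\,(\mathbold{e}_i - \mathbold{e}_j)(\mathbold{e}_i - \mathbold{e}_j)^*$, a rank-$1$ update. Hence $K = 1$ suffices, with update vectors $\mathbold{c} = q\,(\mathbold{e}_i - \mathbold{e}_j)$ and $\mathbold{d} = \mathbold{e}_i - \mathbold{e}_j$, both computable in $O(n)$ time, which also discharges the edge-operation part of condition~(4).

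For conditions~(2) and~(3), note that the newly inserted (respectively last) node $v$ has at most $\Delta$ incident edges, say to neighbours $k_1, \dots, k_t$ with weights $w_1, \dots, w_t$ and $t \le \Delta$. On insertion we append to $\mathbold{L}$ the row and column of $v$ (diagonal entry $\sum_s w_s$, off-diagonal entries $-w_s$ in positions $k_s$), and the only further change to the old $n \times n$ block is that each $\mathbold{L}_{k_s k_s}$ increases by $w_s$ because $\mathbold{D}_{k_s k_s}$ does; this is the update $\sum_{s=1}^{t} w_s\, \mathbold{e}_{k_s}\mathbold{e}_{k_s}^*$, of rank at most $\Delta$, and deletion of the last node is the symmetric statement with each neighbour's diagonal entry decreased. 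Since $\Delta$ is a constant this is a rank-$K$ update, and the at most $\Delta$ update vectors (each of size $n$) are computable in $O(\Delta n) = O(n)$ time, settling conditions~(2), (3), and the remainder of~(4). Finally, for condition~(5), permuting node $i$ with the last node $l$ amounts to swapping columns $i$ and $l$ of $\mathbold{L}$ and then swapping rows $i$ and $l$ of the result; exactly as argued in the proof of Lemma~\ref{lemma:update-efficient}, each column swap (and likewise each row swap) is expressible by $2$ rank-$1$ pairs of update vectors, so the whole permutation is a rank-$4$ update whose update vectors are read off in $O(n)$ time. Combining the five cases shows $\mathbold{M}$ is $n$-update-efficient.

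I expect the main obstacle to be conditions~(2) and~(3): unlike an edge operation, a node insertion perturbs the diagonal (degree) entry of \emph{every} neighbour, so the argument succeeds only because the bounded-degree assumption caps the number of affected diagonal entries—and hence the rank of the induced update—by the constant $\Delta$; without that assumption the update rank would be $\Theta(n)$ in the worst case and the Laplacian embedding would fail to be update-efficient. A secondary, purely computational point to get right is the bookkeeping between $\mathbold{D}$ and $\mathbold{W}$ for edge operations, so that the combined change in $\mathbold{L}$ is recognized as the single rank-$1$ matrix $q\,(\mathbold{e}_i - \mathbold{e}_j)(\mathbold{e}_i - \mathbold{e}_j)^*$ rather than a naive rank-$2$ (or larger) expression.
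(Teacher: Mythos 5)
Your proof is correct and follows the same overall plan as the paper (a condition-by-condition check of Definition~\ref{def:update_efficient}, with the bounded-degree hypothesis doing the work in the node-insertion/deletion cases), but two of your local arguments are genuinely different and in fact tighter. For condition~(1) the paper treats the four affected entries $ij$, $ji$, $ii$, $jj$ separately and so uses an update matrix of rank at most $4$, whereas you exploit the Laplacian identity $\mathbold{L}'-\mathbold{L}=q\,(\mathbold{e}_i-\mathbold{e}_j)(\mathbold{e}_i-\mathbold{e}_j)^*$ to get a single symmetric rank-$1$ update ($K=1$); both satisfy the definition, but your version is cleaner and would also be the more convenient form to feed into the rank-$1$ pseudoinverse/SVD update routines used later in the paper. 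For conditions~(2)--(3) your bookkeeping is the more careful one: you increment each neighbour's diagonal entry by the corresponding edge weight $w_s$, while the paper writes ``increase by $1$,'' which is only correct for the unweighted Laplacian; the rank bound $\Delta$ (the paper's $C$) and the $O(n)$ cost are the same in both treatments. For condition~(5) you permute rows and columns of $\mathbold{L}$ directly, whereas the paper updates $\mathbold{W}$ and $\mathbold{D}$ separately; these are equivalent, both giving a constant-rank update computable in $O(n)$ time.
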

\begin{proof}
Assume that the degrees of the nodes of $G$ are bounded by
a constant $C$.
\begin{enumerate}
\item
When an edge is inserted/deleted between nodes $i$ and
$j$ or its weight changes,
the entries $ij$, $ji$, $ii$ and $jj$ of $\mathbold{M}$ might change.
For each of these entries,
similar to the first case of the proof of Lemma~\ref{lemma:update-efficient},
we can express the change
in terms of a pair of update vectors $\mathbold{c}$ and $\mathbold{d}$.
Hence, all these changes can be stated
in terms of an update matrix whose rank is at most $4$
(which is generated by the sum of four rank-$1$ matrices) and as a result,
condition~(1) of Definition~\ref{def:update_efficient}
is satisfied.
\item
When a new node $i$ is added to $G$, we add a new column
for it in $\mathbold{M}$,
whose $j^{th}$ entry is $-q$,
where $q$ is the weight of the edge between
nodes $i$ and $j$ (if there is no edge between
$i$ and $j$, $q$ is $0$).
In a similar way, we add a new row for it in $\mathbold{M}$.
Furthermore, in $\mathbold{M}$ we
need to increase by $1$ each entry $jj$ such that $j$ has an edge to $i$.
Since the degrees are bounded by $C$,
the number of such revisions is at most $C$.
Each such revision, can be expressed by
a pair of update vectors $\mathbold{c}$ and $\mathbold{d}$,
where the $j^{th}$ entries of these vectors are $1$
and the other entries are $0$.
These $C$ (rank-$1$) update vectors yield
an update matrix whose rank is at most $C$, as a result,
condition (2) of Definition~\ref{def:update_efficient} is satisfied.
\item
When the last node of $G$ is deleted, we delete its corresponding
column and row from $\mathbold{M}$.
Furthermore, in $\mathbold{M}$ we need to decrease
by $1$ each entry $jj$ such that $j$ has an edge to $i$.
Similar to the case of node addition,
this can be done by at most $C$ (rank-$1$) update vectors,
where the non-zero element of each
vector $\mathbold{c}$ is $-1$, rather than $1$.
As a result, condition~(3) of Definition~\ref{def:update_efficient}
is satisfied.
\item
For all the update operations,
the update vectors $\mathbold{c}$ and $\mathbold{d}$ can be computed in
$O(Cn)=O(n)$ time.
As a result, condition~(4) of Definition~\ref{def:update_efficient}
is satisfied.
\item
When in $G$ we permute a node $i$ with the last node,
in $\mathbold{M}$ we need to update both $\mathbold{D}$ and $\mathbold{W}$.
On the one hand, $\mathbold{W}$ can be updated in a way similar to
the permutation case of the proof of Lemma~\ref{lemma:update-efficient}
and this can be done in $O(n)$ time.
On the other hand, by permutation,
weighted degrees
of the nodes, except node $i$ and the last node, do not change.
Therefore $\mathbold{D}$ can be easily updated by exchanging the entry $ii$ and
the entry in the last row and last column of $\mathbold{D}$,
which can be done by two pairs of update vectors.
This can be done in $O(n)$ time.
Hence, condition~(5) of Definition~\ref{def:update_efficient}
is satisfied.
\end{enumerate}
\qed
\end{proof}

Note that weighted Laplacian matrix
(as well as unweighted Laplacian matrix),
without the mentioned constraint on degrees,
are not update-efficient, for any arbitrary complexity function $\mathsf h$.
The reason is that without the mentioned constraint,
node addition/deletion may change the degrees of $\Theta(n)$ nodes,
which then may require a rank-$\Theta(n)$ update matrix
(or $\Theta(n)$ pairs of update vectors).
Nevertheless,
if we restrict update operations,
we might be able to use Laplacian matrix
for the regression of {\em general} dynamic graphs.
For example, if we limit update operations
to {\em edge insertion}, {\em edge deletion}
and {\em edge weight change},
it will not be necessary for nodes of the graph
to have a bounded degree.

\section{Dynamic graph regression under least squares error}
\label{sec:graph_regression}

In this section, we condition on the existence of {\em update-efficient}
matrix embeddings and show that
the exact optimal solution of $l_2$ graph regression
can be updated in a time much faster than computing it from scratch.

At the high-level, the algorithm of
solving $l_2$ dynamic graph regression
consists of three phases:
1) the {\em matrix embedding} phase, wherein
we compute an update-efficient matrix embedding $\mathbold{M}$
for the given graph (we assume that it is static),
2) the {\em pre-processing} phase,
wherein we assume that we are given a static graph
and we find a solution for it, and
3) the {\em update} phase, wherein after any update operation in $G$,
$\mathbold{M}$ and
the already found solution are revised to become valid for the new graph.
For the pre-processing phase, we
use $\mathbold{M}^{\dag} \cdot \mathbold{b}$ as the optimal solution.
Naively computing the SVD and pseudoinverse of $\mathbold{M}$
requires $O\left(\min(nm^2,mn^2)\right)$ time.
However using fast matrix multiplication~\cite{DBLP:conf/issac/Gall14a} (which is not practical!),
it can be done in $O(nm^{1.3728})$ time.
In the proof of Theorem~\ref{theorem:update-efficient},
we discuss how the optimal solution is updated,
after an update operation in $G$.

\begin{theorem}
\label{theorem:update-efficient}
Let $\mathbold{M}$ be a $n\times m$ {\em update efficient} matrix embedding
of graph $G$.
After a pre-processing phase which takes
$O \left(\min(nm^2,n^2m) \right)$ time,
after any update operation in the graph,
the exact optimal solution of the $l_2$ graph regression problem for the updated graph
can be computed in $O(nm)$ time.
\end{theorem}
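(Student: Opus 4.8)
The plan is to maintain a single data structure across updates, namely the Moore--Penrose pseudoinverse $\mathbold{M}^{\dag}$ of the current matrix embedding (together with $\mathbold{M}$ itself, which is trivial to keep current), and to recompute the optimal solution whenever it is queried as the matrix--vector product $\mathbold{x}=\mathbold{M}^{\dag}\cdot\mathbold{b}$, which is optimal by Equation~\ref{eq:solution} and costs $O(nm)$ time. The pre-processing phase computes $\mathbold{M}^{\dag}$ from the SVD of $\mathbold{M}$ in $O(\min(nm^2,n^2m))$ time and stores it. Everything then reduces to showing that $\mathbold{M}^{\dag}$ can be brought up to date after one graph update operation in $O(nm)$ time; after that, reading off $\mathbold{x}$ costs a further $O(nm)$. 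By Definition~\ref{def:update_efficient}, up to computing a constant number of pairs of update vectors --- condition~4, in $O(f(n,m))$ time, which is $O(nm)$ for the adjacency and Laplacian embeddings of Lemmas~\ref{lemma:update-efficient} and~\ref{lemma:laplacian} --- every update operation is a composition of a \emph{constant} number of elementary moves: (i) a rank-one modification $\mathbold{M}\mapsto\mathbold{M}+\mathbold{c}\cdot\mathbold{d}^{*}$ (conditions 1, 2, 3, 5), and (ii) appending or deleting a single row and/or a single column (conditions 2, 3). Since a constant number of $O(nm)$-time steps is still $O(nm)$, it suffices to treat these two moves.

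For move (i), I would invoke the classical closed-form expression for the pseudoinverse of a rank-one--modified matrix (see \cite{general_inverse_book}): $(\mathbold{M}+\mathbold{c}\cdot\mathbold{d}^{*})^{\dag}$ equals $\mathbold{M}^{\dag}$ plus a constant number of rank-one terms, each an outer product of an $m$-vector with an $n$-vector, where the vectors involved are $\mathbold{M}^{\dag}\cdot\mathbold{c}$, $\mathbold{d}^{*}\cdot\mathbold{M}^{\dag}$, the residuals $(\mathbold{I}-\mathbold{M}\cdot\mathbold{M}^{\dag})\cdot\mathbold{c}$ and $\mathbold{d}^{*}\cdot(\mathbold{I}-\mathbold{M}^{\dag}\cdot\mathbold{M})$, and a scalar depending on $1+\mathbold{d}^{*}\cdot\mathbold{M}^{\dag}\cdot\mathbold{c}$. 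Each of these vectors is obtained from $\mathbold{M}$ and $\mathbold{M}^{\dag}$ by $O(1)$ matrix--vector products ($O(nm)$ each), and forming or adding an $m\times n$ outer product is $O(nm)$, so one rank-one update costs $O(nm)$. The rank-$K$ update of condition~1 (edge insertion, edge deletion, edge weight change) is then $K=O(1)$ successive rank-one updates, i.e.\ $O(nm)$ in total; likewise for the rank-$K$ parts appearing in conditions 2 and 3 and for the permutation of condition 5.

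For move (ii), appending a column to $\mathbold{M}$ is handled by the bordering (Greville-type) update formula for the pseudoinverse, which again expresses the new pseudoinverse as the old one, suitably padded, plus $O(1)$ rank-one corrections built from $O(1)$ matrix--vector products --- hence $O(nm)$; appending a row is the same formula applied to $\mathbold{M}^{*}$, and deleting the last row or column is the symmetric ``downdating'' operation. A node insertion is thus a row and/or column append followed by the condition-2 rank-$K$ correction; deletion of an arbitrary node $i$ is first a permutation of $i$ with the last node (condition 5, handled as a rank-$K$ update) and then a deletion of the last row and column (condition 3) with its rank-$K$ correction. In every case the vector $\mathbold{b}$ undergoes only the matching trivial change --- a coordinate insertion, deletion, or swap --- which is absorbed into the final product $\mathbold{x}=\mathbold{M}^{\dag}\cdot\mathbold{b}$. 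Adding up a constant number of $O(nm)$ elementary steps and the $O(f(n,m))=O(nm)$ cost of computing the update vectors gives the claimed $O(nm)$ update time.

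The main obstacle is bookkeeping rather than a single hard idea. The rank-one pseudoinverse formula is not one identity but splits into several cases, according to whether $\mathbold{c}$ lies in the column space of $\mathbold{M}$, whether $\mathbold{d}$ lies in its row space, and whether $1+\mathbold{d}^{*}\cdot\mathbold{M}^{\dag}\cdot\mathbold{c}$ vanishes --- equivalently, according to whether the rank goes up, stays the same, or goes down --- and the bordering and downdating steps carry their own small case distinctions. One has to verify that in \emph{every} branch the result is still $\mathbold{M}^{\dag}$ plus $O(1)$ rank-one terms, so that the $O(nm)$ bound is never exceeded. The only other point to check --- that $\mathbold{M}$ and the residual operators $\mathbold{I}-\mathbold{M}\cdot\mathbold{M}^{\dag}$ and $\mathbold{I}-\mathbold{M}^{\dag}\cdot\mathbold{M}$ can be applied to a vector in $O(nm)$ time --- is immediate, since $\mathbold{M}$ changes by only $O(1)$ rows or columns, or a known rank-one term, per update.
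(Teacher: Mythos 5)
Your proposal is correct and follows essentially the same route as the paper's proof: maintain $\mathbold{M}^{\dag}$, handle each rank-one modification with the closed-form pseudoinverse update (this is exactly Meyer's algorithm, which the paper cites), handle row/column appends with Greville's bordering formula and deletions by first permuting the node to the last position and then downdating, and finally recompute $\mathbold{x}=\mathbold{M}^{\dag}\cdot\mathbold{b}$ in $O(nm)$ time. Your explicit acknowledgment of the case distinctions in the rank-one formula and of the adjustments to $\mathbold{b}$ matches (and slightly sharpens) the paper's treatment, so no gap remains.
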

\begin{proof}
After an update operation in $G$,
we require to first update $\mathbold{M}^{\dag}$ and then,
update $\mathbold{M}^{\dag} \cdot \mathbold{b}$.
The way of updating $\mathbold{M}^{\dag}$ depends on the update operation done in the graph.

\begin{itemize}
\item{Edge insertion/deletion or edge weight change:}
in any of these cases, due to the {\em update-efficient} property of
$\mathbold{M}$,
we have a sequence of at most $K$ {\em rank-$1$} updates:
$$\mathbold{M^{k+1}}=\mathbold{M^k}+\mathbold{c^k} \cdot {\mathbold{d^k}}^*,$$
for $1 \leq k < K$,
where
$\mathbold{c^k}$ and ${\mathbold{d^k}}$ are a pair of update vectors,
$\mathbold{M^1}=\mathbold{M}$
and $\mathbold{M^K}$ is the correct matrix embedding of $G$
after the update operation.
After each {\em rank-$1$} update $\mathbold{M^{k+1}}=\mathbold{M^k}+\mathbold{c^k} \cdot {\mathbold{d^k}}^*$,
we may exploit e.g., the algorithm of Meyer~\cite{Generalized_Inverse_2}
that given a matrix $\mathbold{A}$ and its Moore-Penrose pseudoinverse
$\mathbold{A}^{\dag}$
and a pair of update vectors $\mathbold{c}$ and $\mathbold{d}$,
computes Moore-Penrose pseudoinverse of $(\mathbold{A}+\mathbold{c} \cdot \mathbold{d}^*)$.
Due to many notations and a long explanation required to introduce
this method, we here omit its description and
refer the interested reader to \cite{Generalized_Inverse_2}.
The key point is that given $\mathbold{A}^{\dag}$,
computing $(\mathbold{A}+\mathbold{c} \cdot \mathbold{d}^*)^{\dag}$
can be done in $O(nm)$ time.
Therefore and after applying this algorithm for at most $K$ times,
we can compute the Moore-Penrose pseudoinverse of the matrix embedding
of the updated graph in
$O(Knm)=O(nm)$ time.

\item{Node insertion:}
in this case, we need to follow a two-phase procedure.
In the first step, we require to append a row and (if needed) a column to $\mathbold{M}$
that correspond to the new node and carry out its embedding information.
Let's focus on appending a new column (appending a new row can be dealt with in a similar
way).
Speaking precisely,
we have matrix
$$\mathbold{M'}= [\mathbold{M} \;\;\;\;\; \mathbold{a}],$$
where $\mathbold{a}$ is the column corresponding to the new node
and $\mathbold{M}$ is the matrix embedding of $G$ before the update operation.
We want to compute $\mathbold{M'}^{\dag}$ based on $\mathbold{M}^{\dag}$ and
$\mathbold{a}$.
For this, we may use e.g., the Greville's algorithm \cite{Greville},
which is as follows.
Let $\mathbold{d} = \mathbold{M}^{\dag} \cdot \mathbold{a}$,
$\mathbold{c} = \mathbold{a} - \mathbold{M} \cdot \mathbold{d}$,
and $\mathbold{\theta}$ be the null matrix (of proper size), i.e.,
the matrix whose all elements are zero.
Then
\begin{equation*}
\mathbold{M'}{^\dag} = \left[
\begin{matrix}
\mathbold{M}^{\dag} - \mathbold{d} \cdot \mathbold{f} \\
\mathbold{f} \\
\end{matrix}
\right]
\end{equation*}
where
\begin{equation}
\mathbold{f} =
\begin{cases}
\mathbold{c}^{\dag}, & \text{if } \mathbold{c} \neq \mathbold{\theta} \\
(1+\mathbold{d}^* \cdot \mathbold{d})^{-1} \mathbold{d}^* \cdot \mathbold{M}^{\dag} , & \text{if } \mathbold{c} = \mathbold{\theta}
\end{cases}
\end{equation}

Since adding the new node may affect the vector embeddings of the existing nodes
by a sequence of at most $K$ rank-$1$ update vectors, in the second step,
we need to reflect these changes to the matrix embedding of $G$.
In other words, for at most $K$ pairs
of vectors $\mathbold{c^k}$ and $\mathbold{d^k}$,
we have:
$$\mathbold{M^{k+1}}=\mathbold{M^k}+\mathbold{c^k} \cdot {\mathbold{d^k}}^*,$$
where $\mathbold{M^1}$ is $\mathbold{M'}$
and $\mathbold{M^K}$ is the correct matrix embedding of $G$
after the node insertion.
Hence, similar to the previous case,
we may use the algorithm of Meyer~\cite{Generalized_Inverse_2}
to compute
${\mathbold{M^{k+1}}}^{\dag}$ based on ${\mathbold{M^k}}^{\dag}$.
Each of these two steps takes $O(nm)$ time.

\item{Node deletion:}
in this case,
we need to follow a three-phase procedure.
In the first step,
we perform a permutation on $\mathbold{M}$
so that the node that we want to delete
becomes the last node in the matrix
(i.e., it becomes the node with the largest id).
Note that updating $\mathbold{M}$ with this new permutation of nodes
may need to call a sequence of at most $K$ rank-$1$ update vectors.
Hence,
we may need to use Meyer's algorithm \cite{Generalized_Inverse_2} to compute
each ${\mathbold{M^{k+1}}}^{\dag}$ based on ${\mathbold{M^{k}}}^{\dag}$ and the update vectors, where ${\mathbold{M^{1}}}=\mathbold{M}$.
In the second step,
we need to delete a row and (if needed) a column from $\mathbold{M}$
that correspond to the deleted node and carry out its embedding information.
Let's again focus on deleting a column (as deleting a row can be done in a similar way).
We have matrix $\mathbold{M'}$ such that
$$\mathbold{M}= [\mathbold{M'} \;\;\;\;\; \mathbold{a}],$$
where $\mathbold{a}$ is the column corresponding to the deleted node
and $\mathbold{M}$ is matrix embedding of $G$ before the update operation.
We may again use the Greville's algorithm \cite{Greville}
to compute $\mathbold{M'}^{\dag}$ based on $\mathbold{M}^{\dag}$ and $\mathbold{a}$.
Finally,
since deleting a node may change the vector embeddings of the existing nodes by a
sequence of at most $K$ rank-$1$ update vectors,
in the third step,
we need to apply these changes to the matrix embedding of the graph.
Hence and similar to the previous cases,
we can use Meyer's algorithm \cite{Generalized_Inverse_2} to compute
each ${\mathbold{M^{k+1}}}^{\dag}$ based on ${\mathbold{M^{k}}}^{\dag}$ and the update vectors, with ${\mathbold{M^{1}}}=\mathbold{M'}$.
Each of these three steps takes at most $O(nm)$ time.
\end{itemize}
As a result, after an update operation in $G$,
$\mathbold{M}^{\dag}$ can be updated in $O(nm)$ time.
Note that in the case of node deletion,
when we perform a permutation on the nodes of $G$ and rows of $\mathbold{M}$,
we need also consistently permute the elements of $\mathbold{b}$ and then,
remove from $\mathbold{b}$ the measured value of the deleted node
(which after the permutation will be the last element of $\mathbold{b}$).
These operations can be done in $O(n)$ time.
Furthermore, after a node insertion,
we need to add the measured value of the new node to the end of $\mathbold{b}$,
which can be done in $O(1)$ time.
A naive multiplication of the updated $\mathbold{M}^{\dag}$
and the updated $\mathbold{b}$ yields
the optimal solution of the updated graph
and it takes only $O(nm)$ time.
\qed
\end{proof}


\begin{corollary}
\label{corollary1}
Given a graph $G$,
if its matrix embedding is defined as
its (weighted or unweighted) adjacency matrix,
after any update operation in $G$
(i.e., node insertion or node deletion or edge insertion or edge deletion or edge weight change),
the exact optimal solution of
the $l_2$ dynamic graph regression problem
can be updated
in $O(n^2)$ time.
\end{corollary}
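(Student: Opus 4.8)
The plan is to specialize Theorem~\ref{theorem:update-efficient} to the case where the matrix embedding $\mathbold{M}$ is the adjacency matrix, which is square, $\mathbold{M}\in\mathbb{R}^{n\times n}$, so that $m=n$. First I would invoke Lemma~\ref{lemma:update-efficient}, which shows that the weighted adjacency matrix of a simple directed graph is an $n$-update-efficient matrix embedding, i.e.\ it satisfies all five conditions of Definition~\ref{def:update_efficient} with complexity function $f(n,m)=n$. The unweighted adjacency matrix is the special case in which every edge weight is restricted to $\{0,1\}$: an edge insertion (resp.\ deletion) is the event $q=1$ (resp.\ $q=-1$) in the rank-$1$ argument of case~(1) of that proof, and the arguments for node insertion/deletion and for node permutation are unchanged, so it is $n$-update-efficient as well. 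For an undirected graph the adjacency matrix is symmetric, so an edge update touches the two symmetric entries $ij$ and $ji$ and is a rank-$2$ rather than rank-$1$ update — still a bounded-rank update, hence $n$-update-efficiency is preserved.

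Next I would plug $m=n$ into Theorem~\ref{theorem:update-efficient}. The pre-processing phase costs $O(\min(nm^2,n^2m))=O(n^3)$; this is not part of the claimed bound, but it records that the amortization is over $O(n^2)$-time updates against an $O(n^3)$ rebuild. After any update operation, Theorem~\ref{theorem:update-efficient} updates $\mathbold{M}^{\dag}$ by a constant number of rank-$1$ pseudoinverse updates (plus at most one Greville row/column update) and then recomputes $\mathbold{M}^{\dag}\cdot\mathbold{b}$, all in $O(nm)=O(n^2)$ time; the $f(n)=n$ time needed by conditions~(4) and~(5) to produce the update vectors, and the $O(n)$ bookkeeping on $\mathbold{b}$ in the node-insertion/deletion cases, are absorbed into $O(n^2)$.

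The one point that needs care is that node insertion and node deletion change the dimension of $\mathbold{M}$: after an insertion it becomes $(n+1)\times(n+1)$ and after a deletion $(n-1)\times(n-1)$, so the running time is $O((n\pm1)^2)=O(n^2)$ when $n$ is taken to be the number of nodes of the \emph{revised} graph, consistent with the statement. I expect this dimension-change remark to be the only genuine subtlety; everything else is a direct substitution of $m=n$ into the already-established theorem and lemma, together with the observation that the unweighted (and undirected) cases still produce bounded-rank updates, so I do not anticipate a real obstacle.
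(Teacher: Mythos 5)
Your proposal is correct and follows essentially the same route as the paper: invoke Lemma~\ref{lemma:update-efficient} to establish that the adjacency matrix is an $n$-update-efficient embedding of size $n \times n$, then specialize Theorem~\ref{theorem:update-efficient} with $m=n$ to obtain the $O(n^2)$ update bound. Your additional remarks on the unweighted and undirected cases and on the dimension change after node insertion/deletion are harmless elaborations of what the paper leaves implicit.
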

\begin{proof}
 Lemma~\ref{lemma:update-efficient} says that the (weighted) adjacency matrix of $G$
 is a $n$-update-efficient and therefore, a $n^2$-update-efficient matrix embedding of $G$.
 Hence and as Theorem~\ref{theorem:update-efficient} says,
 after any update operation in the graph,
 the optimal solution of the $l_2$ graph regression problem for the
 updated graph can be computed in $O(n^2)$ time.
 \qed
\end{proof}

To the best of our knowledge,
Corollary~\ref{corollary1} provides the first result on updating the exact optimal solution of linear regression
over general dynamic graphs,
in a time less than computing it from scratch.
Note that if the weighted adjacency matrix of $G$ is used as the matrix embedding,
computing the optimal solution
of dynamic graph regression from scratch
will take $O(n^3)$ time.

As mentioned earlier, if we restrict the update operations to
{\em edge insertion}, {\em edge deletion} and
{\em edge weight change},
the Laplacian matrix will
be a $n^2$-update-efficient matrix embedding.
Hence, we will have the following result.

\begin{corollary}
\label{corollary2}
Suppose that we are given an undirected graph $G$,
whose matrix embedding is defined as
its (weighted or unweighted) Laplacian matrix.
\begin{enumerate}
\item
After an edge insertion or an edge deletion
or an edge weight change in $G$,
we can update the exact optimal solution of
the $l_2$ dynamic graph regression problem
in $O(n^2)$ time.
\item
If $G$ is a bounded-degree graph,
after any update operation in $G$,
i.e., node insertion (wherein a bounded number of edges are drawn between the new node and the existing nodes)
or node deletion or edge insertion or edge deletion or edge weight change,
the exact optimal solution can be updated
in $O(n^2)$ time.
\end{enumerate}
\end{corollary}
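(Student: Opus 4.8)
The plan is to reduce both parts to Theorem~\ref{theorem:update-efficient}, exactly as in the proof of Corollary~\ref{corollary1}: once the (weighted or unweighted) Laplacian matrix $\mathbold{M}$ is recognized as an update-efficient embedding of the appropriate kind, and since $\mathbold{M}$ has size $n \times n$ (so $m = n$ and $nm = n^2$), the theorem yields an update time of $O(nm) = O(n^2)$ after an $O(\min(nm^2, n^2 m)) = O(n^3)$ pre-processing phase. Recall that $\mathbold{x} = \mathbold{M}^{\dag} \mathbold{b}$ is an optimal $l_2$ solution (Equation~\ref{eq:solution}), so it is enough to maintain $\mathbold{M}^{\dag}$ and $\mathbold{b}$ and re-multiply them after each update.

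For part~2, I would invoke Lemma~\ref{lemma:laplacian} directly: when $G$ is weighted (hence also in the unweighted special case), undirected and of bounded degree, $\mathbold{M}$ is $n$-update-efficient, and therefore a fortiori $n^2$-update-efficient (a larger complexity function $f$ is always admissible in Definition~\ref{def:update_efficient}). Applying Theorem~\ref{theorem:update-efficient} with $m = n$ gives the update time $O(nm) = O(n^2)$; the $O(f(n,m)) = O(n)$ cost of producing the at most $C$ rank-$1$ update vectors caused by a node insertion/deletion, of handling the permutation in the node-deletion case, and of the bookkeeping on $\mathbold{b}$ (appending the new observed value, or permuting and removing the deleted one) is all dominated by the $O(n^2)$ term, exactly as in the node-operation branches of the proof of Theorem~\ref{theorem:update-efficient}.

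For part~1, the step that needs the most care is that Lemma~\ref{lemma:laplacian} does \emph{not} hold without the degree bound --- as observed in the discussion right after that lemma, a single node insertion or deletion can change the degrees of $\Theta(n)$ nodes and hence force a rank-$\Theta(n)$ update matrix. The resolution is to restrict the allowed update operations to edge insertion, edge deletion and edge weight change; under this restriction only condition~(1) and the edge case of condition~(4) of Definition~\ref{def:update_efficient} are ever used, and these hold for the Laplacian with the constant $K = 4$ --- an edge update between $i$ and $j$ alters only the entries $ij$, $ji$, $ii$, $jj$, each realizable by a rank-$1$ pair of update vectors computable in $O(n)$ time (this is precisely case~(1) of the proof of Lemma~\ref{lemma:laplacian}, and it uses no degree bound). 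Thus $\mathbold{M}$ is update-efficient with respect to this restricted operation set, and the edge-insertion/deletion/weight-change branch of the proof of Theorem~\ref{theorem:update-efficient} applies unchanged: at most $K = 4$ rank-$1$ updates of $\mathbold{M}^{\dag}$ are performed via Meyer's algorithm~\cite{Generalized_Inverse_2}, each in $O(nm) = O(n^2)$ time, after which $\mathbold{M}^{\dag} \mathbold{b}$ is recomputed in $O(n^2)$ time (the vector $\mathbold{b}$ being unchanged, since no node is added or removed). The only genuine verification is therefore that the $l_2$-regression theorem, on this operation set, relies solely on the conditions that survive the restriction; granting that, the $O(n^2)$ bound is immediate.
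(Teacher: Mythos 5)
Your proposal is correct and follows essentially the same route the paper intends: the paper omits the proof of Corollary~\ref{corollary2}, noting it is analogous to that of Corollary~\ref{corollary1}, and your argument (Lemma~\ref{lemma:laplacian} plus Theorem~\ref{theorem:update-efficient} with $m=n$ for the bounded-degree case, and the restriction to edge-related operations so that only condition~(1) and the edge part of condition~(4) of Definition~\ref{def:update_efficient} are needed, with $K=4$, for part~1) is exactly that intended argument. If anything, you make explicit the part-1 subtlety about the Laplacian failing to be update-efficient under node operations, which the paper only discusses informally in the surrounding text.
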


In the end, it might be of interest to see whether similar results can be obtained for other regression models.
Someone may start with linear regression under {\em least absolute deviation}, as it is the closest setting to our studied problem,
and then go beyond linear models and study nonlinear regression functions for dynamic graphs.
In dynamic graph regression under {\em least absolute deviation}, the distance is measured using the $l_1$ norm.
More precisely, we want to solve the following problem:
\begin{equation}
\label{eq:deviation}
argmin_{\mathbold{x}} ||\mathbold{A} \cdot \mathbold{x} - \mathbold{b}||_1,
\end{equation}
where the $l_1$ norm of a vector $\mathbf v$ is defined as follows:
$||\mathbf v||_1 = \sum_i |\mathbf v_i|$.
What makes this problem and most of the other nonlinear regression problems theoretically more challenging,
is that they do not have a closed form solution~\cite{10.1561/0400000060}, i.e., a solution like what is presented in Equation~\ref{eq:closedform} for our studied problem.
However, we believe it might be possible to derive similar results for some of these models,
and leave it as an interesting direction for future work.

\section{Experimental results}
\label{sec:experiments}

In this section, we empirically evaluate the update algorithm.
First in Section~\ref{sec:empirical}, we report the results of our experiments over synthetic and real-world graphs.
Then in Section~\ref{sec:casestudy}, we discuss the applicability and usefulness of
dynamic graph regression and its update algorithm,
in a real-world case study.

\subsection{Empirical evaluation}
\label{sec:empirical}

In this section,
we examine the empirical efficiency of updating the solution of dynamic graph regression,
compared to computing it from scratch.
Since node insertion and node deletion include edge insertion and edge deletion,
we only consider these two update operations.
In node insertion, a new node along with some edges connecting it to the existing nodes are added to the graph.
In node deletion, the node with the largest id is deleted from the graph.
Each of these operations is conducted for $50$ times and the average results are reported.
We define the matrix embedding of the graph as its adjacency matrix.
We run  both the update algorithm and the algorithm
of computing the solution from scratch, and compare their times.

\begin{figure*}
\centering
\subfigure[Node insertion.]
{
\includegraphics[scale=1.3]{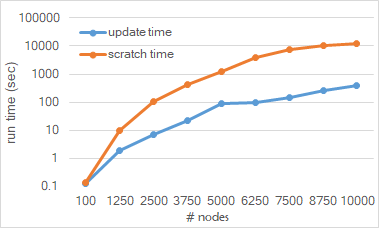}
\label{fig:add}
}
\quad
\subfigure[Node deletion.]
{
\includegraphics[scale=1.3]{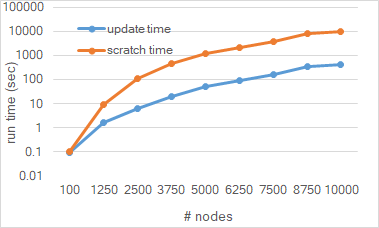}
\label{fig:remove}
}
\caption
{
\label{fig:evaluation}
Empirical evaluation over synthetic graphs.
The vertical axes are presented in the logarithmic scale.
}
\end{figure*}

\subsubsection{Synthetic graphs}
We study how
the update algorithm scales with respect to the size of the graph,
compared to the algorithm that computes the solution from scratch.
We generate several ER random graphs~\cite{jrnl:Erdos} with $100$, $1250$, $2500$,
$3750$, $5000$, $6250$, $7500$, $8750$ and $10000$ nodes,
all with the same edge induction probability $0.5$.
Figures~\ref{fig:add} and \ref{fig:remove} present
the results over the generated graphs, respectively for node insertion and node deletion.
Due to large differences between update and scratch times,
in the charts of these figures the vertical axes are presented in the logarithmic scale.
As depicted in the figures,
incrementally updating the solution
rather than computing it from scratch takes much less time.
Moreover, it is more scalable: it can handle large graphs within still an empirically tractable time.
The time of computing the solution from
scratch quickly increases by increasing the size of the graph,
so that performing it for large graphs is
intractable in practice.

\subsubsection{Real-world graphs}

\begin{table*}
\caption{Empirical evaluation over real-world graphs. \label{table:r1}
}
\centering
\begin{tabular}{ l| l l| l l }
\hline
\multirow{2}{*}{Dataset}& \multicolumn{2}{|c|}{node insertion} & \multicolumn{2}{|c}{node deletion} \\
\cline{2-5}
&  update time  & scratch time   & update time  & scratch time \\
\hline
{\em wiki-vote}             & {\bf 193.76} & 1601.20  &  {\bf 159.37} & 1451.28  \\
{\em lastfm-asia}           & {\bf 38.08} &  261.41 &  {\bf 31.06} & 256.83  \\
{\em soc-sign-bitcoinotc}   & {\bf 23.94} & 102.99 &  {\bf 21.31}  &  101.14    \\
\hline
\end{tabular}
\end{table*}

We also evaluate the algorithms over the following real-world graphs:
{\em wiki-vote}~\cite{DBLP:conf/www/LeskovecHK10}\footnote{\url{https://snap.stanford.edu/data/wiki-Vote.html}},
{\em lastfm-asia}~\cite{feather}\footnote{\url{https://snap.stanford.edu/data/feather-lastfm-social.html}}
and {\em soc-sign-bitcoinotc}~\cite{kumar2016edge}\footnote{\url{https://snap.stanford.edu/data/soc-sign-bitcoin-otc.html}}.
{\em Wiki-vote} has 7115 nodes and 103689 edges,
{\em lastfm-asia} has 7624 nodes and 27806 edges,
and {\em soc-sign-bitcoinotc} has 5881 nodes and 35592 edges.
The comparison results are presented in Table~\ref{table:r1}.
They depict that updating the solution of linear regression
is considerably more efficient than computing it from scratch.

We note than while {\em wiki-vote} and {\em lastfm-asia}
have almost the same number of nodes ({\em lastfm-asia} has slightly more nodes),
over {\em wiki-vote} both update time and scratch time are much larger.
The reason is that {\em wiki-vote} is a considerably more dense graph than {\em lastfm-asia}.
When computing the solution from scratch,
we need to compute the pseudoinverse of the adjacency matrix of the updated graph.
This operation is much more time consuming for dense graphs than sparse graphs.
On the other hand, for dense graphs more pairs of update vectors are induced, after the update operation.
Moreover, updating the pseudoinverse of the matrix is more time consuming.
As a result, update time over a graph such as {\em wiki-vote} is larger
than update time over a sparser graph such as {\em lastfm-asia}.



\subsection{A case study}
\label{sec:casestudy}


Cryptocurrency price prediction is a challenging task in market data analysis.
Correlations between cryptocurrencies~\cite{report:graph_regression}
can be useful in predicting cryptocurrency prices.
In this section, we investigate applying a linear regression model
to predict cryptocurrency prices.
From the \texttt{coinmetrics} website{\footnote{\url{https://charts.coinmetrics.io}}},
we collect the correlation information and the prices (in the US Dollar) of
the following cryptocurrencies, in July $28^{th}$ 2021:
Bitcoin (\textsf{BTC}), Decred (\textsf{DCR}), DigiByte (\textsf{DGB}),
Dogecoin (\textsf{DOGE}), Litecoin (\textsf{LTC}), MCO Token (\textsf{MCO}) and Vertcoin (\textsf{VTC}).
Figure~\ref{fig:case1} shows the weighted graph of  the correlations,
where the nodes are the currencies and the weight of an edge represents the correlation between its two endpoints.
Figure~\ref{fig:case2} presents the prices of the cryptocurrencies.
Our goal is to develop a linear  model that predicts the price of a cryptocurrency,
in the form of a linear combination of its correlations with the other cryptocurrencies.
Since the correlations and the prices are dynamic and time-variant,
we want to update the model by the changes in data.

\begin{figure*}
\centering
\subfigure[The weighted graph of cryptocurrency correlations.]
{
\includegraphics[scale=0.4]{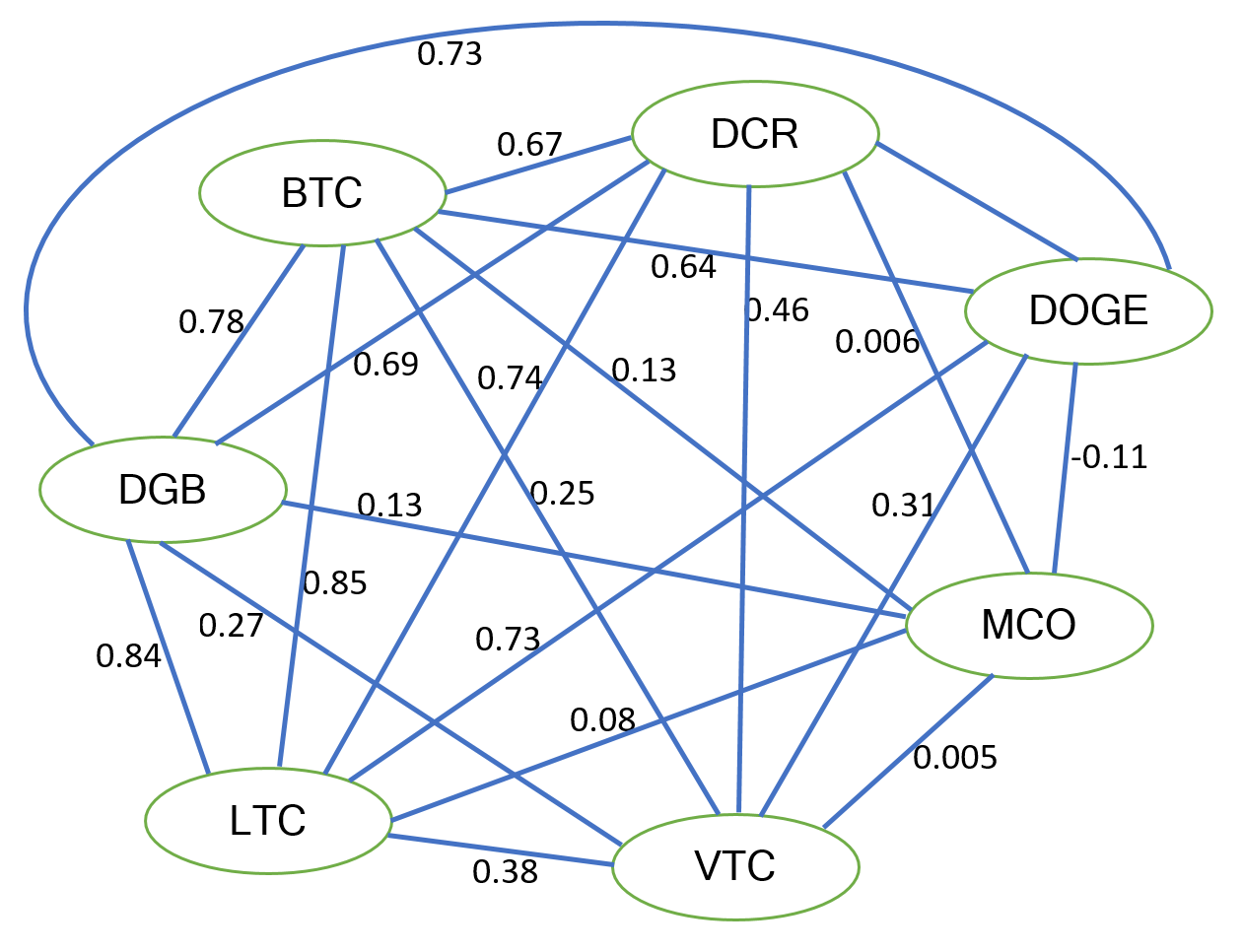}
\label{fig:case1}
}
\quad\quad\quad\quad\quad
\subfigure[Cryptocurrency prices.]
{
\includegraphics[scale=0.4]{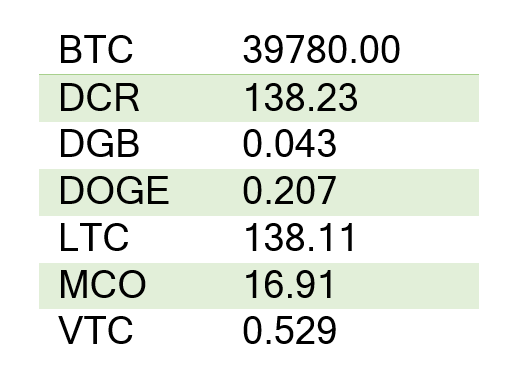}
\label{fig:case2}
}
\caption
{
\label{fig:casestudy}
The graph of cryptocurrency correlations and the prices.
}
\end{figure*}

To do so, we  consider the weighted adjacency matrix of the correlation graph of Figure~\ref{fig:case1}
as the matrix embedding $\mathbold M$, and the values presented in Figure~\ref{fig:case2} as the vector $\mathbold b$.
We solve the corresponding linear regression problem (under $l_2$ norm).
Its exact optimal solution is:
\begin{equation*}
\centering
\mathbold x = \mathbold M^{\dag} \mathbold b =
\begin{bmatrix} -36497 \\ 12368 \\ 14436 \\ -15438 \\ 9706.3 \\ 103000 \\32985 \end{bmatrix}.
\end{equation*}
This means that in the linear regression, the weight of the correlation with \textsf{BTC} is $-36497$,
the weight of the correlation with \textsf{DCR} is $12368$,
the weight of the correlation with \textsf{DGB} is $14436$
and so on.

When the correlation graph or vector $\mathbold b$ change over time,
we need to accordingly update the weights.
We note that as stated in Section~\ref{sec:update-efficient},
weighted adjacency matrix used in this case study is an update efficient matrix embedding.
So, the weights of the linear regression can be updated efficiently.
An example change
is the decrease of the correlation between \textsf{BTC} and \textsf{DGB} from $0.7891$ to $ 0.7682$.
After updating our linear regression model with respect to this change,
the weights in vector $\mathbold x$ become:
\begin{equation*}
\centering
\mathbold x =
\begin{bmatrix} -37069 \\ 12196 \\ 15248 \\ -15252 \\ 9553.5 \\ 102180 \\ 32632 \end{bmatrix}.
\end{equation*}

\section{Conclusion}
\label{sec:conclusion}

In this paper, we
studied the theory of linear regression over
dynamic graphs.
First, we presented the class of {\em update-efficient} matrix embeddings,
that defines conditions sufficient for a matrix embedding
to be used for least squares dynamic graph regression.
Then, we showed that given a $n \times m$
update-efficient matrix embedding (e.g., the adjacency matrix),
after an update operation in the graph,
the exact optimal solution of graph regression
can be updated in $O(nm)$ time.
Finally by conducting experiments over synthetic and real-world graphs,
we showed the high empirical efficiency of the update algorithm.


\bibliographystyle{spmpsci}
\bibliography{allpapers}

\end{document}